\documentclass[11pt,a4paper]{article}
\usepackage[T1]{fontenc}
\usepackage[utf8]{inputenc}
\usepackage{lmodern}
\usepackage[margin=25mm]{geometry}
\usepackage{amsmath,amssymb,amsthm,bm,mathtools}
\usepackage{graphicx,booktabs,tabularx,array,microtype,enumitem}
\usepackage[numbers,sort&compress]{natbib}
\usepackage{tikz}
\usetikzlibrary{arrows.meta,positioning}
\usepackage[hidelinks]{hyperref}
\usepackage[font=small,labelfont=bf]{caption}
\setlist{nosep,leftmargin=*}
\newcommand{\R}{\mathbb{R}}
\newcommand{\sat}{\operatorname{sat}}
\newcommand{\diag}{\operatorname{diag}}
\newcommand{\sgn}{\operatorname{sgn}}
\newcommand{\norm}[1]{\left\lVert#1\right\rVert}
\newtheorem{proposition}{Proposition}
\title{Paired-Outcome Learning for Selective Spacecraft Attitude Capture under Torque Limits}
\author{%
\normalsize Donghe Chen$^{1}$, Jia Deng$^{2}$, Zehao Jia$^{2}$,\\[0.3em]
\normalsize Lin Cheng$^{1,*}$, Shengping Gong$^{1}$\\[0.6em]
\small $^{1}$School of Astronautics, Beihang University, Beijing 100191, China\\
\small $^{2}$Astroup(Beijing) Technology Co., Ltd.\\
\vspace{0.5em}
\footnotesize \href{mailto:paoshou@buaa.edu.cn}{\texttt{paoshou@buaa.edu.cn}},
\href{mailto:dengjia@astroup.com}{\texttt{dengjia@astroup.com}},
\href{mailto:jiazehao@astroup.com}{\texttt{jiazehao@astroup.com}}\\
\footnotesize $^{*}$Corresponding author: \href{mailto:chenglin5580@buaa.edu.cn}{\texttt{chenglin5580@buaa.edu.cn}}\\
\footnotesize \href{mailto:gongsp@buaa.edu.cn}{\texttt{gongsp@buaa.edu.cn}}}
\hypersetup{pdfauthor={Donghe Chen, Jia Deng, Zehao Jia, Lin Cheng, Shengping Gong}}
\date{}
\begin{document}
\maketitle
\vspace{-1.1em}
\begin{abstract}
Torque-limited spacecraft attitude capture requires adapting the maneuver speed to uncertain braking authority. Faster settings can reduce average tracking error while worsening individual maneuvers, making selective modification of an existing controller a baseline-relative learning problem. A paired-outcome method combines three mechanisms: physically interpretable continuation policies with different speed envelopes, separate prediction of task failure and successful-pair tracking gains, and context-dependent rejection that executes the unchanged baseline. Complete paired simulations provide labels under common plant parameters and measurement seeds; training, validation, and calibration are separated by context. Basilisk experiments use 108 development contexts, 324 labeling rollouts, and 624 evaluation episodes across 48 test contexts, including disclosed post-freeze diagnostics. The selective controller reduces aggregate integrated attitude error by 4.67\% relative to the branch baseline, with 17 improvements, 31 exact ties, and no observed regressions. Always using the fast alternative achieves a 4.81\% reduction but worsens 11 cases. Gains occur in principal-axis, near-axis, braking-boundary, and selected noise/drift maneuvers; general coupled cases retain the baseline. Ablations show no additional aggregate benefit from branch commitment, active querying, or score subtraction. The evidence supports a restricted empirical trade-off between average improvement and individual regression, without establishing a closed-loop safety guarantee.
\end{abstract}
\noindent\textbf{Keywords:} spacecraft attitude capture; policy selection; paired-outcome learning; torque saturation; baseline-relative control.

\section{Introduction}\label{sec:introduction}
Autonomous spacecraft maneuvering requires decisions that remain effective when actuation authority and model knowledge are limited. Mission design uses physical structure to identify useful trajectories, as illustrated by energy-domain construction of gravity-assist escape trajectories~\cite{fu2026}; attitude guidance must additionally coordinate pointing requirements and rotational motion~\cite{calaon2025,li2025}. The present study concerns the latter problem at the rigid-body capture level: bringing a large attitude error and nonzero angular rate into a terminal neighborhood under bounded torque and uncertain inertia. When a disturbance nearly consumes the braking torque, a speed setting that appears favorable initially can make a later target crossing harder to arrest. The resulting difficulty is not simply insufficient feedback gain. It is deciding which complete continuation to execute from the observed context, while retaining a functioning controller where the benefit of modification is uncertain.

Existing approaches can be organized into model-based feedback adaptation, constrained command or trajectory modification, and data-driven policy improvement. Model-based feedback designs exploit the dynamics and, where necessary, discrete attitude memory: the topology of rotation and the unwinding phenomenon constrain continuous global stabilization~\cite{bhat2000}, while quaternion hybrid feedback provides a principled tracking construction under its assumptions~\cite{mayhew2011}. Adaptation is mechanism-specific; for example, adaptive notch filtering and gain tuning address rigid--elastic coupling in launch vehicles~\cite{jiang2026}, a different phenomenon from the rigid-body braking problem studied here. Reference and command governors modify inputs to an existing loop~\cite{bemporad1998,garone2017}, including explicit nonlinear constructions~\cite{garone2014} and spacecraft governors for nonconvex pointing constraints~\cite{nicotra2019}. Model predictive control, barrier-function quadratic programs, and predictive safety filters provide complementary constraint-handling frameworks~\cite{mayne2000,ames2017,wabersich2021}. Their assumptions and online constructions differ: an empirical speed envelope does not inherit their invariance or feasibility guarantees. The relevant opportunity is therefore a restricted modification of an existing capture loop, rather than a claimed replacement for these frameworks.

Data-driven improvement can select modifications using observed context, but its performance must be assessed relative to a specific incumbent. Learning-based predictive control and safe-learning research combine learned information with models and uncertainty descriptions~\cite{hewing2020,brunke2022}, with stability guarantees available in explicitly specified settings~\cite{berkenkamp2017}. High-confidence policy improvement, robust baseline-regret minimization, and baseline bootstrapping address different forms of uncertainty in departures from an incumbent~\cite{thomas2015,petrik2016,laroche2019}. These results motivate baseline-relative reasoning but do not directly certify a spacecraft selector trained from paired simulations. A candidate's label is the outcome of a complete feedback continuation, not an instantaneous torque advantage. Task failure and tracking gain also carry different meanings: a low-error successful maneuver and a failed maneuver should not be combined through an arbitrary large cost penalty. Consequently, outcome construction is part of the control design, and a favorable average cannot establish that individual maneuvers improve.

Context-dependent algorithm selection and rejection provide a useful way to organize this decision. Algorithm-selection models and solver portfolios exploit complementary instance-dependent performance~\cite{rice1975,xu2008}; selective prediction adds an explicit abstention option and studies the resulting risk--coverage trade-off~\cite{cortes2016,geifman2017,geifman2019}. In control, abstention must execute a specified feedback continuation. Three issues remain for the present portfolio: physically meaningful alternatives must expose the speed--braking trade-off; matched complete-episode labels must distinguish failure from gain; and deployment must compare the predicted benefit with the option of retaining the baseline. These are the concrete design questions addressed below. They do not imply that existing selective-learning theories are deficient in their own settings or that a learned score is a control-safety certificate.

To address these issues, paired-outcome learning is used to select among a frozen branch controller and two interpretable speed alternatives. The baseline first provides the common state history and initial capture branch. The alternatives adjust the reference-speed envelope, while separate predictors estimate failure scores and gains from successful pairs. A rejection rule then determines whether the entire alternative continuation is executed; rejecting a change invokes the unchanged baseline, including its inherited branch decisions. The technical contributions are summarized as follows:
\begin{enumerate}
\item A branch-conditioned continuation portfolio makes the physical speed choices and the no-change action executable under the same torque limits.
\item A context-level paired labeling and learning formulation separates task failure from baseline-relative tracking improvement.
\item Selective execution provides verifiable baseline recovery on rejection, with its performance trade-off assessed through fixed-policy counterfactuals and module ablations.
\end{enumerate}
\noindent These are task-specific engineering contributions built from established control and learning tools. Basilisk evaluation shows a 4.67\% aggregate error reduction with 17 gains and 31 exact ties relative to the branch baseline, whereas fixed fast control produces 11 regressions despite a slightly lower mean error. The experiments also identify components without an additional demonstrated benefit, rather than treating every added module as an independent innovation.

Section~\ref{sec:problem} formulates the dynamics, baseline-relative objective, and three design issues. Section~\ref{sec:method} develops the continuation policies, paired predictors, and selective execution rule. Section~\ref{sec:experiments} evaluates control performance, policy-selection behavior, mechanism ablations, and applicability limits. Section~\ref{sec:conclusion} summarizes the findings and remaining validation needs. Appendices give implementation, data-acquisition, and reproduction details.

\section{Problem formulation and design issues}\label{sec:problem}
The objective is selective improvement of a fixed-target capture controller. This section defines the rigid-body plant and available measurements, distinguishes task completion from relative performance, and derives the three design issues that organize the method.

\subsection{Spacecraft dynamics and available information}
Let $\bm\sigma\in\R^3$ be the modified Rodrigues parameter (MRP) attitude error and $\bm\omega\in\R^3$ the body angular velocity. With the simulator's conventions,
\begin{align}
\dot{\bm\sigma}&=\tfrac14\big[(1-\bm\sigma^T\bm\sigma)I+2[\bm\sigma]_\times+2\bm\sigma\bm\sigma^T\big]\bm\omega,\label{eq:mrp}\\
J\dot{\bm\omega}&=\bm u+\bm d(t)-\bm\omega\times J\bm\omega.\label{eq:euler}
\end{align}
The unknown inertia $J$ is diagonal and constant. The disturbance $\bm d$ and applied torque $\bm u$ are expressed in the body frame. The nominal inertia is $J_n=\diag(900,800,600)\;\mathrm{kg\,m^2}$. Online inputs comprise measured MRP and angular velocity, $J_n$, torque limits, the control period, and last applied torque. Noise is added in MRP and rate coordinates. The reported suite has no measurement delay, actuator lag, or slew constraint.

Componentwise saturation enforces $|u_i|\leq\bar u_i$. Five test families use $\bar u_i=0.3\;\mathrm{N\,m}$; the ordinary family samples a common limit in $[0.25,0.5]\;\mathrm{N\,m}$. The evaluated rate limit is $2^\circ/\mathrm{s}$. Torque clipping is an implementation property; sampled rate satisfaction is not a continuous-time certificate.

The unsigned attitude error is $a(t)=2\arccos|q_0(t)|$, with
\begin{equation}
q=\begin{bmatrix}(1-\norm{\bm\sigma}^2)/(1+\norm{\bm\sigma}^2)\\2\bm\sigma/(1+\norm{\bm\sigma}^2)\end{bmatrix}.
\end{equation}
Shadow MRPs are used when needed. Controller equations use radians and seconds; reported attitude integrals use degrees.

\subsection{Capture objective and baseline-relative evaluation}
Over a horizon $T$, task completion requires $a(t)<1^\circ$ and $\norm{\bm\omega(t)}<0.1^\circ/\mathrm{s}$ at every sample in $[T-60,T]$, and a full-episode rate maximum no greater than $2^\circ/\mathrm{s}$ up to a numerical tolerance of $10^{-4}\,{}^\circ/\mathrm{s}$. Let $F_a\in\{0,1\}$ indicate failure under portfolio action $a$. The tracking and effort metrics are
\begin{equation}
C_a=\int_0^T a_a(t)\,dt\quad[\mathrm{deg\,s}],\qquad E_a=\int_0^T\norm{\bm u_a(t)}^2dt.\label{eq:metrics}
\end{equation}
Trapezoidal integration uses the original $0.1\,$s logs. Settling time is the first sample after the final terminal-threshold violation, rather than the first entry into the terminal region.

At the first capture-planning event, observable context $x$ selects one policy from $\Pi=\{\pi_0,\pi_1,\pi_2\}$; $\pi_0$ is the existing controller. Let $z$ denote the actual plant, initial state, and measurement-noise realization, which determine the closed-loop costs $C_a(z)$ and task labels $F_a(z)$. A selector $g:x\mapsto a$ is evaluated on a finite suite through
\begin{align}
\overline\Delta_g&=\frac1N\sum_{i=1}^{N}\left[C_0(z_i)-C_{g(x_i)}(z_i)\right],\label{eq:relativeobjective}\\
R_g&=\sum_{i=1}^{N}\mathbf1\{C_{g(x_i)}(z_i)>C_0(z_i)+\epsilon_C\},\\
N_g^{\rm new}&=\sum_{i=1}^{N}\mathbf1\{F_{g(x_i)}(z_i)=1,\ F_0(z_i)=0\},\label{eq:regressioncounts}
\end{align}
where $\epsilon_C=10^{-6}\;\mathrm{deg\,s}$ distinguishes numerical ties. The design seeks positive $\overline\Delta_g$ while limiting $R_g$ and new task failures; these are empirical evaluation objectives, not constraints proved for unseen maneuvers. In particular, $\overline\Delta_g>0$ does not imply $R_g=0$. The no-change action supplies a defined comparison and fallback, rather than a claim that its performance is globally optimal.

\subsection{Three design issues for selective capture}\label{sec:issues}
\paragraph{Speed adaptation depends on braking authority.}
For a scalar projected motion with constant available deceleration $b>0$, stopping a rate $v$ requires distance $r_{\rm stop}=v^2/(2b)$ and has sensitivity
\begin{equation}
\frac{\partial r_{\rm stop}}{\partial b}=-\frac{v^2}{2b^2}.\label{eq:stopsensitivity}
\end{equation}
This elementary scalar calculation explains why uncertain weak braking can make the target crossing sensitive to speed choice. In three-dimensional rigid-body motion, gyroscopic coupling, changing disturbance estimates, and branch selection prevent interpreting this proxy as a global capture certificate. The first design issue is to construct alternatives with interpretable speed--braking differences while retaining the existing feedback architecture.

\paragraph{Immediate control changes do not label complete-task benefit.}
Changing a speed envelope changes subsequent states, braking phases, and possibly branch updates, so an instantaneous acceleration estimate does not determine $C_0-C_a$. Comparing unrelated cases also confounds policy differences with plant and initial-condition differences. The second issue is to evaluate all alternatives from a common context under the same $z_i$, while keeping failure labels separate from successful-pair gain targets. Case-level pairing addresses the label definition; it does not remove the need for adequate coverage of difficult maneuvers.

\paragraph{Estimated benefit must be weighed against retaining the incumbent.}
A predictor can rank two alternatives even when neither improves on $\pi_0$. Moreover, a gain model trained only on successful pairs does not describe task failure. The third issue is to combine relative-gain evidence, failure scores, and a context-support check with an executable no-change decision. The next section resolves these issues at the architecture and implementation level. Its accepted actions remain empirically assessed; only rejection admits the exact baseline-identity statement given below.

\section{Paired-outcome selective capture}\label{sec:method}
The controller follows the three issues identified in Section~\ref{sec:issues}: construct physically interpretable continuations, learn matched complete-task outcomes, and reject changes when their evidence is insufficient. Figure~\ref{fig:architecture} shows the online information flow; estimation and branch planning remain part of the inherited capture loop.

\begin{figure}[t]\centering
\begin{tikzpicture}[>=Latex,node distance=7mm and 7mm,
box/.style={draw,rounded corners=2pt,align=center,font=\small,minimum height=10mm,text width=40mm}]
\node[box] (obs) {Public measurements\\and estimator state};
\node[box,right=of obs] (branch) {Frozen baseline\\chooses a branch};
\node[box,right=of branch] (sel) {Failure / gain models\\and rejection rules};
\node[box,below=of sel] (change) {Accept: retain branch\\and shape speed};
\node[box,below=of branch] (base) {Reject: complete\\baseline continuation};
\draw[->] (obs)--(branch);\draw[->] (branch)--(sel);
\draw[->] (sel)--(change);\draw[->] (sel.south west)--(base.north east);
\end{tikzpicture}
\caption{One portfolio decision at the first capture-planning event. Rejection preserves the baseline's existing learned branch decisions. Acceptance changes a continuation policy, not one isolated torque sample.}\label{fig:architecture}
\end{figure}

\subsection{Branch-conditioned continuation policies}\label{sec:baseline}
The first mechanism exposes a small physical choice: retain the baseline speed law, use a balanced envelope, or use a faster envelope. Shared estimation and the initial branch decision are inherited so that the outer learner selects complete, directly comparable continuations.

\paragraph{Shared estimator and capture trigger.}
A nominal momentum observer estimates a lumped disturbance:
\begin{align}
\widehat{\bm d}_k&=\ell(J_n\bm\omega_k-\widehat{\bm h}_k),\\
\widehat{\bm h}_{k+1}&=\widehat{\bm h}_k+\Delta t\big(\bm u_{k-1}-\bm\omega_k\times J_n\bm\omega_k+\widehat{\bm d}_k\big),\label{eq:observer}
\end{align}
where $\ell=0.1\;\mathrm{s^{-1}}$ and $\widehat{\bm h}_0=J_n\bm\omega_0$. Outside capture intervention, all actions use
\begin{equation}
\bm u_{\rm normal}=\sat_{\bar{\bm u}}(-5\bm\sigma-50\bm\omega-\widehat{\bm d}).\label{eq:normal}
\end{equation}
These custom gains, with units producing torque, differ from the simulator example's separate feedback gains.

A fixed initial axis $\bm n$ is extracted from the vector part of $q$. The angle $\theta$ is unwrapped from $2\operatorname{atan2}(\bm q_v^T\bm n,q_0)$ and $v=\bm n^T\bm\omega$. An estimator retains per-axis correlated polygons $(J_i,d_i)$ initialized by $J_i\in[0.7,1.3]J_{n,i}$ and $|d_i|\leq(0.25,0.25,0.1)_i\;\mathrm{N\,m}$. Integrated Euler-equation strips update these polygons every $10\,$s (Appendix~\ref{app:estimator}).

Projected braking authority $b$ gives a stopping-distance proxy $v^2/(2b)$. Capture assessment begins when a current/one-step-ahead proxy exceeds $85\%$ of the positive remaining angle, or measured rate exceeds $1.9^\circ/\mathrm{s}$. The first $10\,$s of assessment uses braking. The branch planner then considers a nearby lifted target, one further $2\pi$ crossing in the current direction, and the committed target during replanning. The frozen baseline attempts a learned two-branch ranking with rejection. Otherwise it uses two finite parameter-scenario predictions, a $2\,$s step, a $900\,$s horizon, and a $30\,$s target-switch margin. Replanning is considered after $60\,$s if the disturbance estimate has changed by more than $0.03\;\mathrm{N\,m}$.

The inherited branch model used 108 earlier training contexts and 28 calibration contexts grouped into 26 source cases. Those costs are excluded from the 324 new portfolio-label rollouts. Finite two-scenario prediction and projected stopping bounds are engineering approximations, not a general $SO(3)$ capture certificate.

\paragraph{Alternative reference-speed envelopes.}
Let $\theta_\star$ be the inherited planner's target and $e=\theta-\theta_\star$. Reference-speed shaping uses
\begin{align}
s(e,b)&=\min\left\{\bar v,\frac{|e|}{\tau_T},\sqrt{2\gamma\max(b,0)|e|}\right\},\label{eq:envelope}\\
\bm\omega_r&=-\sgn(e)\bm n\,s(e,b),\\
\bm u&=\sat_{\bar{\bm u}}\left[J_n\frac{\bm\omega_r-\bm\omega}{\tau_v}+\bm\omega\times J_n\bm\omega-\widehat{\bm d}-K_\perp\bm\sigma_\perp\right].\label{eq:torque}
\end{align}
Here $\bm\sigma_\perp$ is the MRP residual relative to $q_r=[\cos(\theta/2),\bm n\sin(\theta/2)]$. It supplies transverse feedback; the actual plant remains three-dimensional. In a selected braking phase, $\bm\omega_r=0$ and $\tau_v=0.2\,$s until $|v|<0.01^\circ/\mathrm{s}$.

\begin{table}[t]\centering\small
\caption{Portfolio parameters: speed caps in degrees per second and time constants in seconds; $K_\perp=5$ in torque units per MRP component.}\label{tab:profiles}
\begin{tabular}{lrrrrl}\toprule
Action & $\bar v$ & $\gamma$ & $\tau_v$ & $\tau_T$ & Capture target update\\\midrule
0: baseline & 1.6 & 0.65 & 8 & 15 & inherited replanning\\
1: balanced & 1.6 & 0.80 & 4 & 10 & retain chosen target\\
2: fast & 1.8 & 0.85 & 3 & 8 & retain chosen target\\\bottomrule
\end{tabular}\end{table}

All policies share the same executed prefix and initial target choice. Action $0$ retains the full original baseline including replanning. Actions $1$ and $2$ retain the chosen target during scheduled replanning within the same capture episode. Normal feedback resumes when lifted error is below $3^\circ$ and rate below $0.15^\circ/\mathrm{s}$; a later assessment may choose a fresh target. The portfolio action itself is held for the entire simulation. The alternatives change both speed shaping and target commitment. Commitment is a candidate implementation choice; its incremental value is tested separately in Section~\ref{sec:supplement}.

\subsection{Paired failure and relative-gain learning}\label{sec:data}
The second mechanism learns which continuation is useful from the same decision context. Three full-episode simulations under common actual parameters and noise seeds produce $(F_0,F_1,F_2)$ and the corresponding tracking costs. Successful-pair regression uses
\begin{equation}
y_a=\log\frac{\max(C_0,1)}{\max(C_a,1)},\qquad a\in\{1,2\},\label{eq:gain}
\end{equation}
with numerical floor $1\;\mathrm{deg\,s}$, and fits only pairs with $F_0=F_a=0$. Failed continuations remain classification examples; they are not converted into arbitrary large regression costs. A context includes its common observation prefix. Neighboring frames are not split into nominally independent examples.

\paragraph{Context-level separation and coverage.}
A stratified generator creates 96 contexts: 48 training, 24 validation, and 24 calibration, assigned before policy outcomes are computed. Six families cover ordinary, principal-axis, near-axis, general coupled, braking-boundary, and noise/drift maneuvers. Actual inertia can be used to design offline boundary cases but is never an online feature. Each context runs all three policies with the same actual plant and noise seed, yielding complete-episode outcomes.

The final set has 60 training, 24 validation, and 24 calibration contexts: 324 full rollouts plus 96 short prefixes. Three training contexts never reach a planning event. The main selector therefore fits 57 contexts, whose four symmetry copies produce 228 classifier rows, not 240 independent samples. The no-active-query version fits 45 planning contexts from its 48 training contexts. One fast-policy failure occurs in the development data; tail-risk coverage is consequently sparse.

Training-only acquisition adds 12 cases selected from 96 short baseline prefixes; its rule is documented in Appendix~\ref{app:query}. Active learning~\cite{settles2009} motivates this optional coverage/disagreement heuristic. Error distribution smoothing also emphasizes the distinction between sampling density and function complexity in low-dimensional regression~\cite{chen2026}; the present method does not implement that pruning algorithm. Additional labels are evaluated through a no-query ablation rather than presumed beneficial.

\paragraph{Observable features and symmetry.}
The 34-dimensional decision vector contains normalized lifted angle and projected rate, measured angular velocity, fixed axis, estimated disturbance, torque limits, forward/reverse braking estimates, transverse rate, the nominal gyroscopic term, per-axis polygon extrema, chosen-target distance, and braking status. Nine additional dimensionless combinations describe stopping reserve, axis dominance, transverse coupling, gyroscopic magnitude, disturbance magnitude/projection, target distance, braking status, and total rate (Appendix~\ref{app:features}).

The augmentation group is restricted to
\begin{equation}
\mathcal G=\{\diag(1,1,1),\diag(-1,-1,1),\diag(-1,1,-1),\diag(1,-1,-1)\}.\label{eq:group}
\end{equation}
These proper rotations preserve diagonal inertia and symmetric torque boxes. Vector features transform by $G$; signed disturbance-interval endpoints are negated and swapped as appropriate. Scalar target distances remain unchanged. No reflection, axis permutation, or full rotational invariance is assumed.

Random forests~\cite{breiman2001} and extremely randomized trees~\cite{geurts2006} motivate the tabular ensemble; implementation uses scikit-learn~\cite{pedregosa2011}. Each alternative has five context-bootstrapped ExtraTrees gain regressors: 64 trees per regressor, maximum depth 7, and minimum leaf size 2. Predictions are first averaged over $\mathcal G$, then across the five regressors, giving mean $\mu_a$ and standard deviation $s_a$. Three separate 96-tree failure classifiers use all available task labels, class-balanced weights, maximum depth 7, and minimum leaf size 2. Their symmetry-averaged outputs $r_a$ are scores, not calibrated failure probabilities. Set $\mu_0=s_0=0$.

With augmented feature map $\phi(x)\in\R^{43}$, the distance check is
\begin{equation}
D(x)=\frac1{\sqrt{43}}\min_{G\in\mathcal G,\ z\in\mathcal X_{\rm train}}
\norm{(\phi(Gx)-z)\oslash\bm s_x},\label{eq:distance}
\end{equation}
where $\bm s_x$ is the coordinatewise training standard deviation floored at 0.05. The training set includes symmetry copies. The validation-distance 95th percentile times 1.25 gives $D_{\max}=1.59358$.

\subsection{Selective execution and baseline recovery}\label{sec:selection}
The third mechanism compares a proposed change with the no-change action. Validation selects eligibility and gain thresholds, and a disjoint calibration split supplies an empirical score subtraction. The resulting policy is held for the episode; it is not a torque filter that repeatedly repairs the applied input.

Eligible alternatives satisfy
\begin{equation}
\mathcal A(x)=\{a\in\{1,2\}:r_a(x)\leq\rho,\ r_a(x)\leq r_0(x)+0.05\}.
\end{equation}
Define $h_a=\mu_a-\kappa s_a$ and $\widetilde a=\arg\max_{a\in\mathcal A(x)}h_a$ if the set is nonempty. The decision is
\begin{equation}
a^*(x)=\begin{cases}
\widetilde a,&D(x)\leq D_{\max},\ \mathcal A(x)\ne\varnothing,\ h_{\widetilde a}(x)-q>\delta,\\
0,&\text{otherwise}.
\end{cases}\label{eq:select}
\end{equation}
The validation grid uses $\rho\in\{0.02,0.1,0.25,0.5\}$, $\kappa\in\{0,1\}$, and $\delta\in\{0,0.02,0.05\}$. Rules introducing new validation failures are excluded; remaining rules minimize (total failures, summed IAE, number of nonbaseline selections) lexicographically. The selected parameters are $(\rho,\kappa,\delta)=(0.02,0,0.02)$. A positive dispersion penalty is therefore not supported by this validation choice.

With these choices fixed, the 24 calibration contexts use the uncorrected decision ($q=0$). For a proposed nonzero action, $\eta_i=h_{\widetilde a_i}(x_i)-y_{\widetilde a_i}$; an unsafe continuation uses surrogate $y=-2$. No-change contexts receive $\eta_i=0$. The held-out correction is
\begin{equation}
q=\max\{0,\eta_{(k)}\},\quad k=\min\{24,\lceil(24+1)0.9\rceil\}=23,\label{eq:cal}
\end{equation}
giving $q=0.003238652$. Calibration methodology motivates this order-statistic operation~\cite{angelopoulos2022}. Conformalized quantile regression~\cite{romano2019} constructs intervals using a different procedure, while methods beyond exchangeability~\cite{barber2022} address departures from standard calibration assumptions. Neither construction is implemented here. The stratified context construction, failure surrogate, and selective deployment are not accompanied by a proof of conditional improvement or closed-loop safety. It is an empirical score subtraction, not a 90\% control-safety guarantee.

\begin{figure}[t]\centering\fbox{\begin{minipage}{0.94\linewidth}\small
\textbf{Algorithm 1: Selective continuation}
\begin{enumerate}
\item Execute the common prefix; update observer and inertia--disturbance polygons.
\item Obtain a lifted target and braking status from the frozen branch planner; construct public context $x$.
\item Predict $r_a$, $\mu_a$, $s_a$, and $D(x)$ with symmetry-averaged models.
\item Apply~\eqref{eq:select}. For $a^*=0$, retain the complete baseline controller.
\item Otherwise use the chosen speed profile and retain the target within capture; resume normal feedback on exit. Hold the portfolio action until the episode ends.
\end{enumerate}
\end{minipage}}
\caption{The online decision. Offline stages create paired labels, fit predictors, choose rules on validation, and compute the correction on a disjoint calibration split.}\label{alg:online}
\end{figure}

\paragraph{Exact behavior on rejection.}
\begin{proposition}[Conditional baseline identity]\label{prop:identity}
Suppose action $0$ invokes exactly the baseline state updates and control law, with the same initial state, measurement sequence, and numerical schedule. If the selector chooses $0$, its full executed trajectory equals the baseline trajectory.
\end{proposition}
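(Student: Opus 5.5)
The argument is an induction over the discrete control schedule. It compares the selector's closed loop with the baseline's closed loop sample by sample, using the hypothesis that action $0$ invokes exactly the baseline state updates and control law. Write the executed system as a deterministic discrete-time map: at each control instant $k$ the controller state $\xi_k$ (observer momentum $\widehat{\bm h}_k$, the inertia--disturbance polygons, capture/branch status, committed target, last applied torque) and the measurement $y_k$ determine the torque $\bm u_k$. The plant state then evolves under the simulator's integrator over one period with the same actual $J$, disturbance $\bm d(t)$, and noise realization $z$. Under these hypotheses both runs are driven by identical exogenous data, so any divergence must originate in the controller.

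The steps are as follows. First, the two runs agree on the common prefix before the first capture-planning event: there the selector plays no role, both use \eqref{eq:normal} and the observer \eqref{eq:observer}, and induction on $k$ gives equal plant states, equal measurements (same seed and schedule), equal $\xi_k$, and hence equal torques after saturation. Second, at the decision instant, the context $x$ is constructed only from these shared quantities. The selector's evaluation of $r_a$, $\mu_a$, $s_a$, $D(x)$ is a side computation that reads, but does not write, $\xi_k$, the plant, or the random-number stream used for measurement noise. This is the point to check in the implementation, and I would state it explicitly as part of the ``exactly the baseline state updates'' hypothesis. Third, when $a^*(x)=0$, Algorithm~1 retains the complete baseline controller for the rest of the episode, because the portfolio action is held. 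From the decision instant onward, the controller update and output maps therefore coincide with the baseline's, including inherited replanning and later capture assessments. The same induction then continues to $T$, giving equality of $(\bm\sigma,\bm\omega)$ at every integrator step, of all logged torques, and hence of $C_0$, $E_0$, and $F_0$.

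The main obstacle is the second step: ruling out hidden coupling through which the selector's computation could perturb the baseline run. Examples are consumption of a shared pseudo-random stream, mutation of estimator or planner state, or altered timing in the numerical schedule. The proposition does not prove this from the mathematics; it places it in the hypothesis. The proof should therefore make clear that equality follows by determinism and induction once the selector is a pure function of the recorded context, and that acceptance ($a^*\neq0$) receives no such statement. I would also note that equality is exact, not approximate, because the same floating-point operations are executed in the same order.
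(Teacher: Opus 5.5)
Your proposal is correct and follows the paper's proof: the common prefix yields identical plant and controller states at selection, action $0$ changes no state update or applied-torque computation, and equality propagates sample by sample through the deterministic simulator under the same measurement sequence. Your concern about hidden coupling (a shared noise stream, mutation of estimator state, timing) is already covered by the hypotheses of identical state updates, measurement sequence, and numerical schedule. One small correction: the common prefix also includes the capture-assessment braking and the frozen branch choice, not only \eqref{eq:normal}, but the same induction covers those steps.
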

\begin{proof}
The common prefix produces identical plant and controller states at selection. Action $0$ changes no state update or applied-torque computation. Equality propagates sample by sample through the deterministic simulator with the same measurement sequence.
\end{proof}
This is a software/closed-loop identity, not a new stability theorem, and says nothing about accepted nonzero actions. A $900\,$s development trajectory and all 31 rejected test cases were checked for exact equality in every logged column. Prediction and decision invariance were checked over all 104 development contexts with available features. Shallow-tree inference occurs once per episode; inherited branch planning remains a separate computational cost.

\section{Numerical validation}\label{sec:experiments}
The experiments test the three design questions using native rigid-body simulations: control benefit under common physical conditions, the role of context-dependent rejection, and the effect of individual modules. Protocol and comparator definitions precede the results; per-case failures, adverse examples, and computational limitations are reported alongside average performance.

\subsection{Evaluation design and comparator definitions}
The native rigid-body plant uses Basilisk 2.12.0~\cite{kenneally2020}. Nominal mass ($750\,$kg), inertia, and torque-effector organization follow the official attitude-feedback example~\cite{basiliskexample}, with a custom control interface and disturbances. The environment uses Python 3.12, NumPy 2.3.5, scikit-learn 1.8.0, and joblib 1.5.3. Candidate controllers see no actual plant parameters.

\begin{table}[t]\centering\small
\caption{New portfolio data and evaluation budget. Symmetry copies are not independent contexts.}\label{tab:data}
\begin{tabular}{lrrl}\toprule
Stage & Contexts & Full rollouts & Purpose\\\midrule
Base training & 48 & 144 & preliminary fitting\\
Training-only queries & 12 & 36 & training augmentation\\
Validation & 24 & 72 & rule / fixed-policy selection\\
Calibration & 24 & 72 & score subtraction\\\midrule
Development total & 108 & 324 & excludes inherited branch data\\
Initial frozen test & 48 & 384 & eight configurations per case\\
Post-freeze diagnostics & 48 reused & 240 & five added configurations\\\bottomrule
\end{tabular}\end{table}

Each family supplies eight base-training and four validation/calibration cases; active querying adds two per family. The test uses eight new cases per family, disjoint generator offsets and seeds, and horizons of $600\,$s for ordinary cases and $900\,$s otherwise. Five families sample $J_i/J_{n,i}\in[0.8,1.2]$, angles typically $100$--$165^\circ$, and projected initial speed $1$--$1.7^\circ/\mathrm{s}$. Near-axis cases perturb a principal axis by Gaussian components scaled between 0.02 and 0.13. Coupled and noise/drift cases use a normalized Gaussian axis. Ordinary cases instead sample $20$--$140^\circ$ and component rates from $[-0.02,0.02]\;\mathrm{rad/s}$.

Boundary cases target scalar stopping-distance/remaining-angle ratios
\begin{equation}
\mathcal R_b=\{0.65,0.9,1,1.1,1.4,1.9,2.3,2.8\},
\end{equation}
using actual inertia offline; speed clipping to $[0.65,1.8]^\circ/\mathrm{s}$ can change the realized ratio. Their along-axis disturbance is $0.22$--$0.25\;\mathrm{N\,m}$ on every axis. Consequently, the two $z$-axis boundary test cases exceed the estimator's $0.1\;\mathrm{N\,m}$ prior on that axis; these are prior-mismatch cases, not cases satisfying a robust disturbance assumption. Noise/drift cases use MRP/rate Gaussian standard deviations $10^{-5}$ and $10^{-5}\;\mathrm{rad/s}$, and $0.004\;\mathrm{N\,m}$ disturbance sinusoids with periods $100$--$180\,$s. Ordinary cases also use these sensor-noise levels, constant disturbances sampled from $[-0.12,0.12]\;\mathrm{N\,m}$ (the $z$ component clipped to $[-0.08,0.08]$), and sinusoidal amplitudes $0.01\;\mathrm{N\,m}$ with periods $80$--$180\,$s. In the other non-boundary families, signed disturbance magnitudes are sampled at 0.88--0.99 times $(0.25,0.25,0.1)\;\mathrm{N\,m}$, with zero components when the generating axis component is zero. Exact configurations and seeds are retained in the evidence bundle.

The eight configurations are: the frozen branch baseline; the prior strict-domain speed controller; the validation-selected best fixed portfolio policy; a geometric lookup selecting fixed policies in three public-feature bins; the same learner without active queries; the main learner without $q$; greedy mean-gain selection without risk/distance/margin checks; and the full selective portfolio. The best fixed policy is action $0$, so it deliberately duplicates the branch baseline rather than adding an independent competitor. The geometric lookup selects action 2 in its principal-axis bin and action 0 elsewhere, with the same distance rejection. The prior strict-domain controller is an earlier engineering variant, not a published state-of-the-art method.

Models, parameters, source hashes, and test generation were frozen before the completed tests. The archived audit retains the corrected legacy-import record and confirms that the models and selection parameters did not change. Earlier development tests are not counted as additional independent validation.

Five post-freeze configurations add 240 complete rollouts on the same 48 test contexts: fixed balanced, fixed fast, commitment-only, selected-speed-only with inherited branch replanning, and native PD. No model or threshold is retuned. The total is 624 evaluation episodes across 13 configurations, including the validation-fixed duplicate of the branch baseline. These diagnostics reuse the test contexts; they are not another independent validation set. Native PD uses $K=3.5$, $P=30$, no disturbance observer, and no candidate correction, so it is a reference architecture rather than a matched ablation of intelligence. Every configuration uses the same measurements, physical torque limits, actual plant, and noise seed within a case.

The tests address three questions in order: whether selected continuations improve capture performance, whether the outer learner avoids harmful unconditional changes, and which components account for the observed result. Statistical intervals resample paired context indices 10,000 times with seed 834092; they describe case variation in this stratified suite, not repeated independent training or a mission population.

\subsection{Capture accuracy and applied-control cost}\label{sec:performance}
Selective execution improves aggregate tracking performance relative to the branch baseline while slightly increasing control effort (Table~\ref{tab:overall} and Fig.~\ref{fig:baselineoverview}). Mean IAE decreases from $26\,060.4$ to $24\,842.4\;\mathrm{deg\,s}$, a $4.67\%$ reduction; mean settling time decreases from $400.92$ to $385.16\,$s. Mean effort changes from $81.72$ to $81.96\;\mathrm{N^2m^2s}$, so the result is not energy saving. The selective controller completes all 48 tasks and its maximum sampled rate is $1.90109^\circ/\mathrm{s}$. All other portfolio variants also complete all 48 tasks; native PD completes none under the specified terminal criterion.

The gain is selective rather than uniform. There are 17 strict IAE improvements and 31 exact ties against the branch baseline, with no observed regression; the exploratory interval for mean absolute paired gain is $[731.7,1728.9]\;\mathrm{deg\,s}$ (Table~\ref{tab:paired}). Figure~\ref{fig:paired} retains every case. The comparison with prior gated speed has 11 wins, 32 ties, five losses, and an interval crossing zero. The $1.53\%$ aggregate advantage over geometric lookup includes eight wins and nine losses. These results establish neither universal dominance over those comparators nor a population guarantee.

\begin{table}[tbp]\centering\small\setlength{\tabcolsep}{4pt}
\caption{Unified outcomes on 48 matched contexts. IAE: deg s; settling: s; effort: $\mathrm{N^2m^2s}$. Regressions count IAE increases relative to the branch baseline. $\dagger$ denotes post-freeze diagnostics. Native PD has different gains and no disturbance observer; a missing settling mean indicates at least one unsettled episode.}\label{tab:overall}
\begin{tabular}{lrrrrr}\toprule
Configuration & Tasks & Mean IAE & Settling & Effort & Regressions\\\midrule
Native PD$^\dagger$ & 0/48 & 63,340.6 & --- & 98.44 & 47 \\
Branch baseline & 48/48 & 26,060.4 & 400.92 & 81.72 & 0 \\
Prior gated speed & 48/48 & 26,059.7 & 397.21 & 81.93 & 2 \\
Validation-fixed (= baseline) & 48/48 & 26,060.4 & 400.92 & 81.72 & 0 \\
Fixed balanced$^\dagger$ & 48/48 & 25,118.4 & 393.12 & 82.47 & 10 \\
Fixed fast$^\dagger$ & 48/48 & 24,806.0 & 393.55 & 82.85 & 11 \\
Geometric lookup & 48/48 & 25,228.4 & 389.94 & 81.91 & 1 \\
Greedy learned & 48/48 & 24,539.6 & 385.27 & 82.29 & 4 \\
Selective learned & 48/48 & 24,842.4 & 385.16 & 81.96 & 0 \\
Commitment only$^\dagger$ & 48/48 & 26,265.8 & 403.98 & 81.82 & 2 \\
Selected speed only$^\dagger$ & 48/48 & 24,842.4 & 385.16 & 81.96 & 0 \\
No active queries & 48/48 & 24,830.0 & 385.05 & 81.95 & 0 \\
No score subtraction & 48/48 & 24,812.1 & 385.28 & 81.98 & 0 \\
\bottomrule\end{tabular}\end{table}

\begin{figure}[tbp]\centering\includegraphics[width=\linewidth]{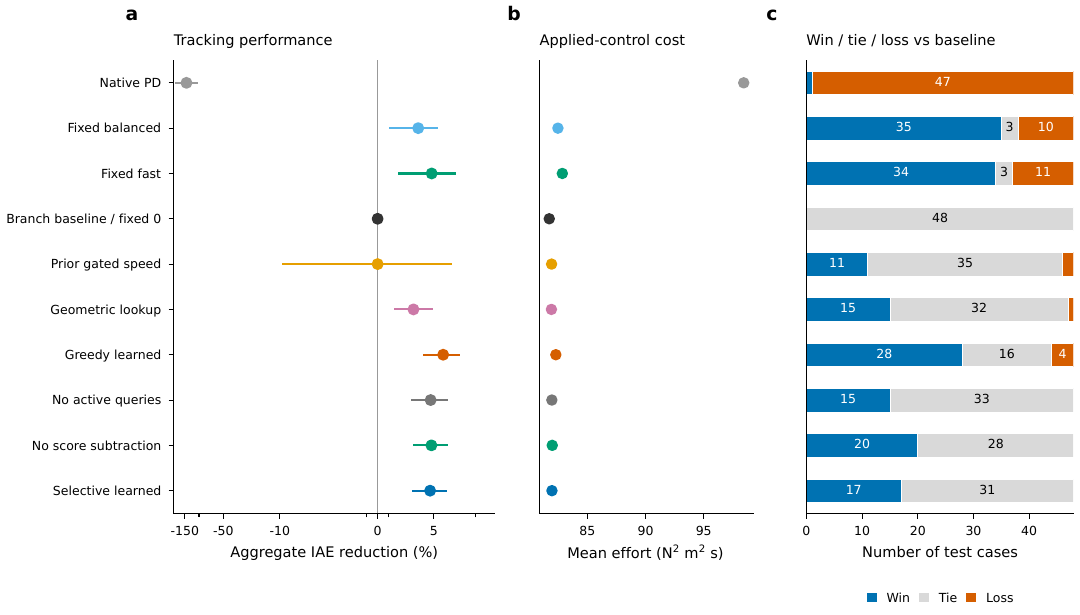}
\caption{Cross-baseline control outcomes on 48 matched contexts. \textbf{a}, Aggregate IAE reduction with exploratory 95\% paired-case bootstrap intervals (10,000 resamples, seed 834092); a symmetric-log axis compresses the native PD reference while retaining a linear region near zero. \textbf{b}, Mean integral of squared actual torque. \textbf{c}, Case-level IAE wins, ties, and losses relative to the branch baseline. Native PD has zero task completions, while the other displayed configurations have 48; an IAE win does not imply task completion. Fixed 0 and validation-fixed duplicate the branch baseline and are represented by its single row.}\label{fig:baselineoverview}\end{figure}
\begin{table}[t]\centering\small
\caption{Main method versus each configuration. Positive reductions favor the main method. Intervals are exploratory paired-case bootstrap intervals for absolute mean IAE improvement, in deg s.}\label{tab:paired}
\begin{tabular}{lrrr}\toprule
Comparison & Reduction (\%) & Win/tie/loss & 95\% interval\\\midrule
Branch baseline & 4.67 & 17/31/0 & $[731.7,\ 1,728.9]$ \\
Prior gated speed & 4.67 & 11/32/5 & $[-508.5,\ 3,274.6]$ \\
Geometric lookup & 1.53 & 8/31/9 & $[55.8,\ 779.2]$ \\
No active queries & -0.05 & 2/43/3 & $[-119.7,\ 112.3]$ \\
No score subtraction & -0.12 & 0/45/3 & $[-71.6,\ 0.0]$ \\
Greedy gain & -1.23 & 4/24/20 & $[-499.2,\ -139.0]$ \\

\bottomrule\end{tabular}\end{table}
\begin{figure}[t]\centering\includegraphics[width=\linewidth]{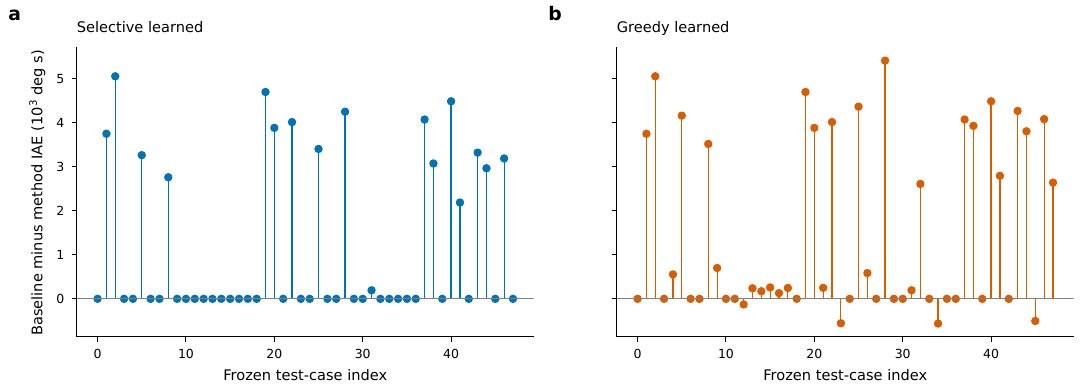}
\caption{Paired IAE changes for all 48 cases. Positive values favor the alternative. Selective control has 17 gains and 31 ties; greedy selection has a larger aggregate gain but four regressions.}\label{fig:paired}\end{figure}
\subsection{Context-dependent selection and avoided regressions}\label{sec:intelligence}
The outer learner's demonstrated role is to trade unconditional improvement against individual regressions. Fixed balanced and fixed fast reduce aggregate IAE by $3.61\%$ and $4.81\%$, but worsen 10 and 11 cases relative to the branch baseline. The full selector obtains $4.67\%$ with zero observed baseline regressions. Fixed fast has a $0.15\%$ lower mean IAE than the selector, so the value of intelligence here is not a new minimum of mean error. Greedy learned selection improves the mean further, by $5.84\%$, while regressing on four cases; the selective method gives up $1.23\%$ of mean IAE relative to greedy selection. All of these policies complete the tasks, so regression avoidance should not be called a demonstrated task-safety advantage.

The counterfactual comparison directly tests what rejection changes (Fig.~\ref{fig:intelligence}). In the 31 rejected contexts, forcing balanced or fast continuation causes 10 or 11 IAE regressions, respectively, with a worst forced-fast increase of $78.6\%$. These are actual complete rollouts, not predicted labels used for test-time tuning. For every test case, the full logged trajectory equals the corresponding executed fixed-policy counterfactual for its chosen action, including action 0. Thus the observed differences follow from policy choice under common conditions. Rejection also forgoes useful alternatives, consistent with its lower aggregate gain than fixed fast and greedy selection. The branch baseline already contains a learned ranking; this experiment assesses the additional outer selector, not an entirely learned architecture versus an entirely nonlearned one.

The useful selection region is limited (Table~\ref{tab:families} and Fig.~\ref{fig:families}). Alternatives are adopted in six principal-axis, five near-axis, four braking-boundary, and two noise/drift cases. Family IAE reductions are $7.75\%$, $6.75\%$, $4.42\%$, and $2.56\%$, respectively. Ordinary and general coupled cases all retain the baseline. The coupled result demonstrates retention, not improved coupled control. Predicted-versus-realized gains and all 48 selected actions are shown alongside the rejected-case counterfactuals, so favorable average performance is not substituted for a description of the operating domain.

\begin{figure}[tbp]\centering\includegraphics[width=\linewidth]{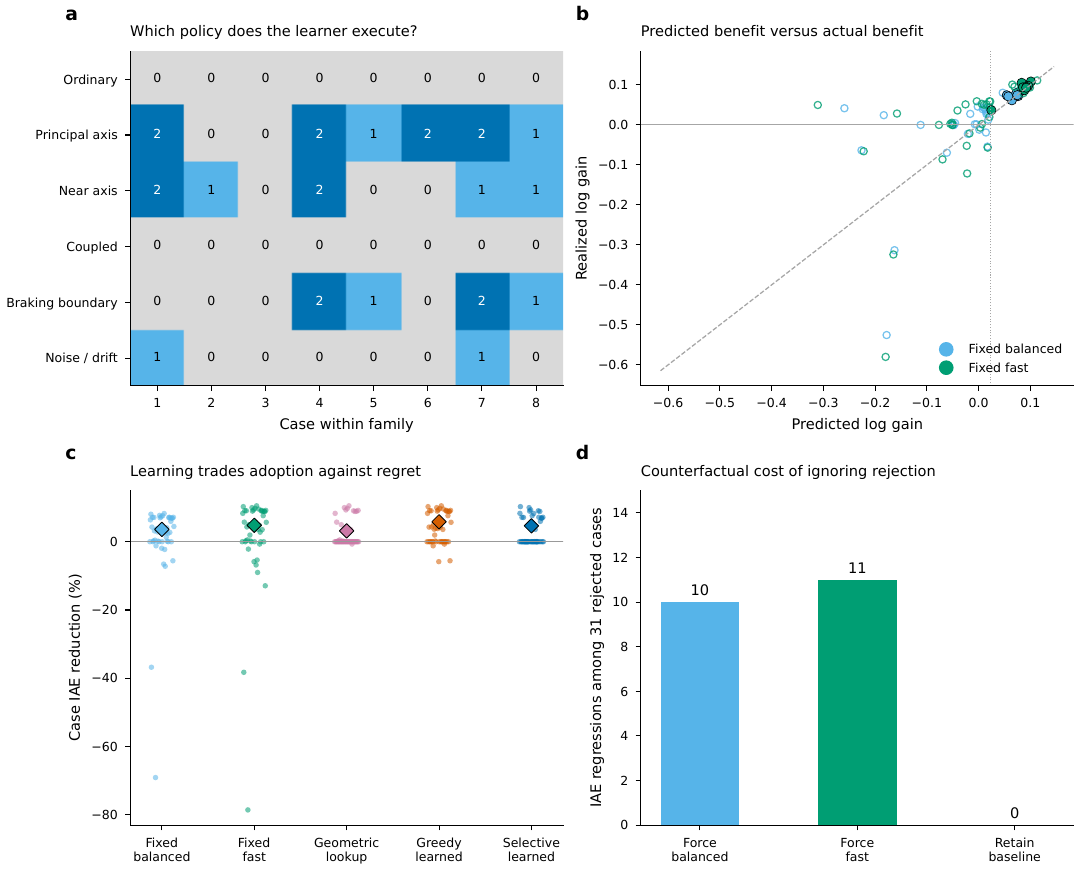}
\caption{Direct diagnostics of the intelligent outer selector. \textbf{a}, Actions in all 48 cases: 0 retains the baseline, 1 uses balanced speed, and 2 uses fast speed. Cases are ordered by original index within each family. \textbf{b}, Predicted and realized log IAE gains for both candidates in contexts with planning features; filled markers are adopted, hollow markers are not. The diagonal is perfect prediction; the vertical dotted line marks the gain margin plus subtraction and is only one acceptance condition. \textbf{c}, Individual case reductions (points) and aggregate reductions (diamonds); their definitions differ. \textbf{d}, Actual counterfactual IAE regressions when alternatives are forced in the 31 rejected cases. These diagnostics do not establish calibrated probabilities or safety guarantees.}\label{fig:intelligence}\end{figure}
\begin{table}[t]\centering\small
\caption{Family-level means and adoption. Each row has eight successes under both methods. Reductions are relative to the branch baseline.}\label{tab:families}
\begin{tabular}{lrrrr}\toprule
Family & Baseline IAE & Selective IAE & Reduction (\%) & Adoptions\\\midrule
Ordinary & 5,642.0 & 5,642.0 & 0.00 & 0/8 \\
Principal axis & 31,279.5 & 28,853.9 & 7.75 & 6/8 \\
Near axis & 32,777.6 & 30,564.3 & 6.75 & 5/8 \\
Coupled & 15,047.9 & 15,047.9 & 0.00 & 0/8 \\
Braking boundary & 45,037.7 & 43,048.9 & 4.42 & 4/8 \\
Noise and drift & 26,577.6 & 25,897.6 & 2.56 & 2/8 \\

\bottomrule\end{tabular}\end{table}
\begin{figure}[t]\centering\includegraphics[width=\linewidth]{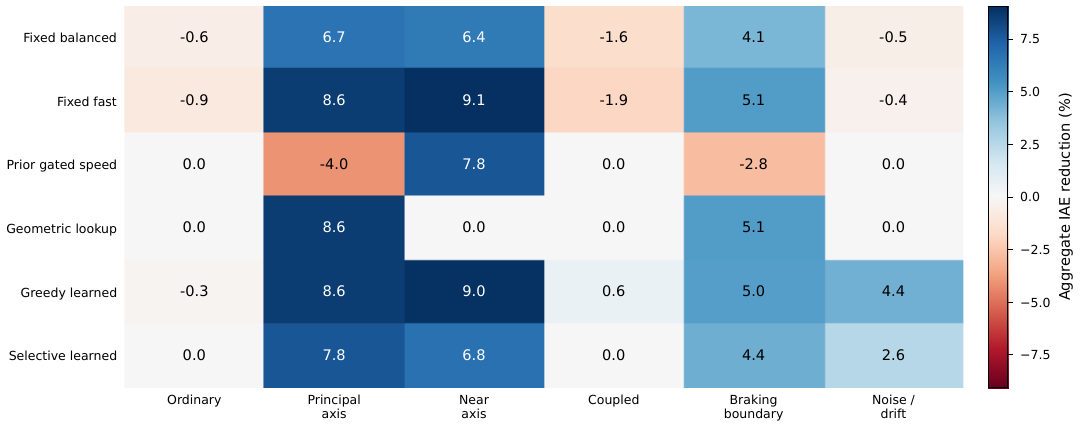}
\caption{Family-level aggregate IAE reductions relative to the branch baseline for each distinct policy comparator; each cell uses eight paired contexts. Positive values indicate improvement. Fixed speed alternatives improve some families while degrading others.}\label{fig:families}\end{figure}
\subsection{Control transients and mechanism ablations}\label{sec:supplement}
State and applied-torque histories show how the compared policies differ during capture (Fig.~\ref{fig:baselinecurves}). The near-axis case 038 is the median absolute gain among five adopted near-axis cases; noise/drift case 041 is the lower gain among two adoptions. Their three-axis MRP, angular-rate, and actual-torque traces appear in Figs.~\ref{fig:states} and \ref{fig:noise}. Shorter error accumulation accompanies changed braking and control transients, but not a pointwise reduction in every state or a universal effort reduction. Shadow-MRP jumps are coordinate switches, and torque magnitudes must not be interpreted as componentwise bounds.

The adverse case 032 is selected as the largest regression relative to prior gated speed. Here the selector retains the branch baseline, while the earlier gated controller captures with much lower accumulated error. Retention therefore prevents a departure from the incumbent but can retain its disadvantage relative to another controller. This deliberately unfavorable example complements the complete-suite statistics.

Component interventions distinguish implemented mechanisms from demonstrated benefits (Fig.~\ref{fig:mechanisms}). Commitment-only increases mean IAE by $0.79\%$ and worsens two cases. Removing commitment from the selected-speed controller leaves all 48 complete trajectories exactly unchanged. The present suite therefore supports selected speed adaptation without an incremental benefit from commitment. This frozen-controller intervention is not a factorial retraining study, so broader interactions remain uncharacterized.

Additional queries and score subtraction also lack an added aggregate benefit. The no-query variant has mean IAE $24\,830.0\;\mathrm{deg\,s}$ and a paired interval crossing zero relative to the full method; the no-subtraction variant has mean IAE $24\,812.1\;\mathrm{deg\,s}$ and no baseline regressions. The absence of regressions cannot therefore be uniquely attributed to $q$. Separate failure/gain modeling is an interpretable design choice, but no matched single-objective ablation establishes its independent performance benefit. Figure~\ref{fig:ablationcurves} retains all four module interventions; their cases are chosen by the largest absolute IAE difference from the full method, explicitly as diagnostic extremes rather than representative averages.

\begin{figure}[tbp]\centering\includegraphics[width=\linewidth]{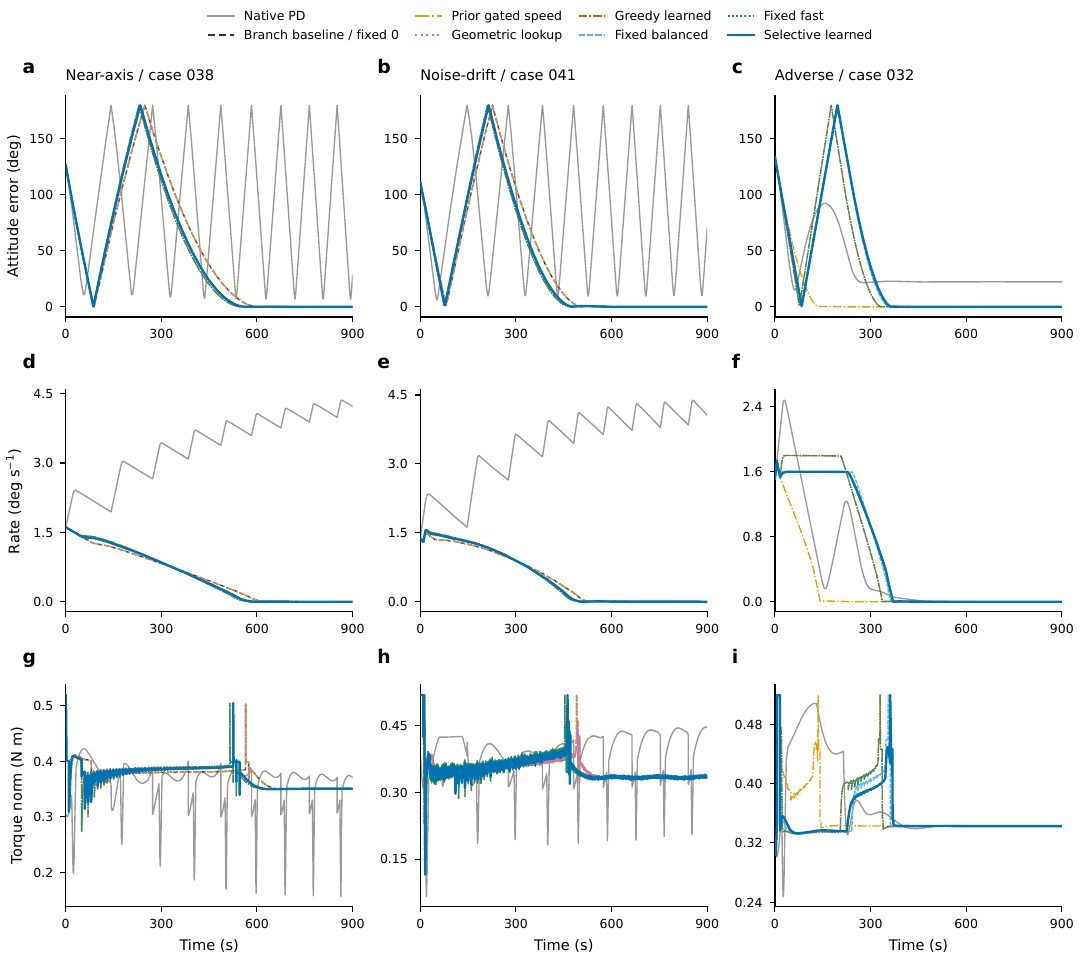}
\caption{Control transients for all distinct policy comparators: native PD reference, branch baseline, prior gated speed, geometric lookup, greedy learned selection, fixed balanced, fixed fast, and full selective learning. Columns show near-axis case 038 (median adopted near-axis gain), noise/drift case 041 (lower adopted gain), and adverse case 032 (largest regression versus prior gated speed). Rows show attitude error, angular-rate magnitude, and actual torque norm. Overlapping trajectories are not omitted. All curves use unsmoothed $0.1\,$s logs; componentwise torque limits should not be inferred from torque norms. Data/correction ablations appear separately in Fig.~\ref{fig:ablationcurves}.}\label{fig:baselinecurves}\end{figure}
\begin{figure}[tbp]\centering\includegraphics[width=\linewidth]{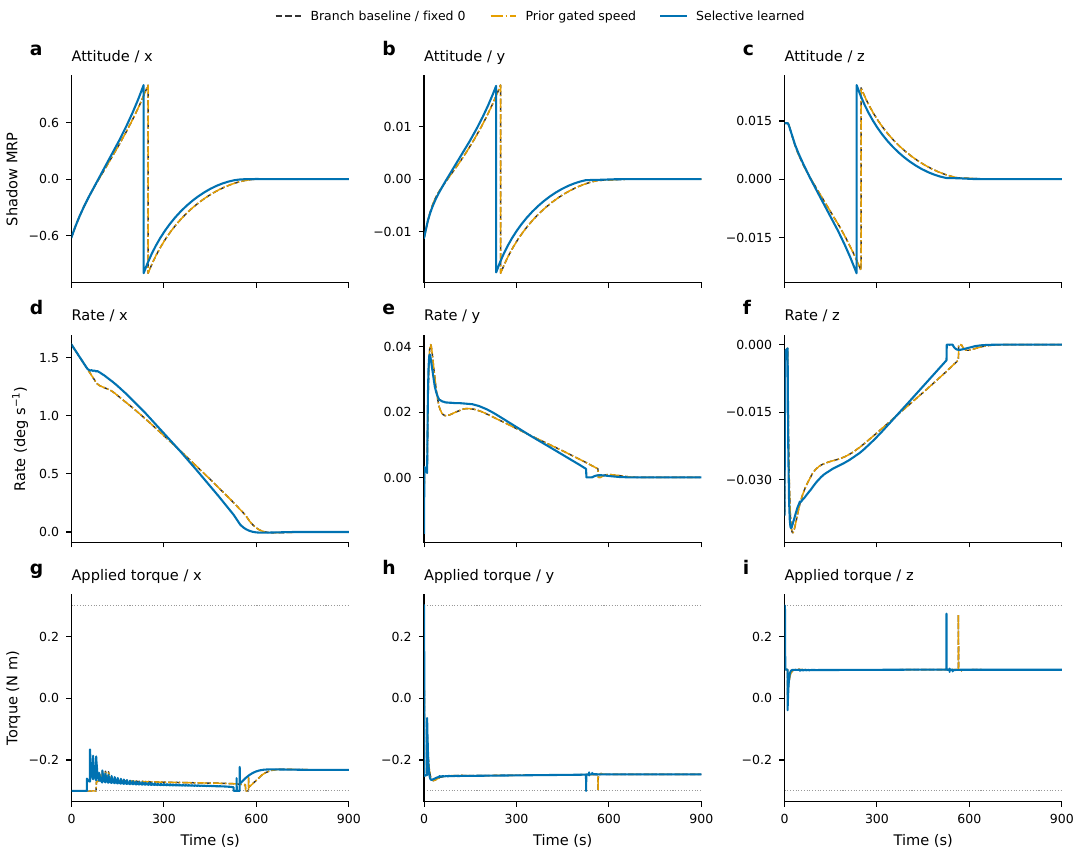}
\caption{Case 038 component histories: shadow MRPs (top), angular velocities (middle), and actual applied torques (bottom). Dotted gray lines mark $\pm0.3\;\mathrm{N\,m}$ component limits. MRP jumps are shadow-coordinate switches, not physical attitude discontinuities. Overlapping curves remain visible as a single trace. All curves use unsmoothed $0.1\,$s logs.}\label{fig:states}\end{figure}
\begin{figure}[tbp]\centering\includegraphics[width=\linewidth]{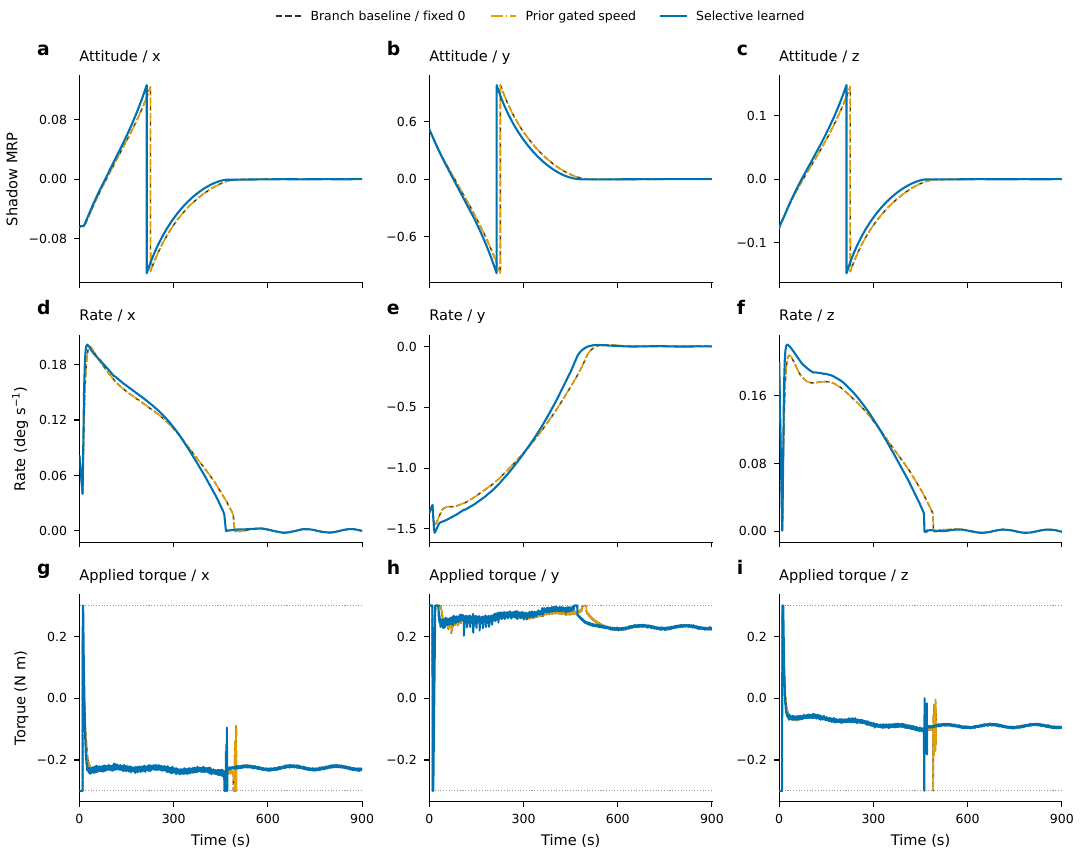}
\caption{Noise/drift case 041, selected as the lower of the two adopted gains in this family. Rows show shadow MRPs, angular velocities, and actual applied torques. The same colors, line styles, time span, and component torque limits as Fig.~\ref{fig:states} permit direct reading across the two examples.}\label{fig:noise}\end{figure}
\begin{figure}[tbp]\centering\includegraphics[width=\linewidth]{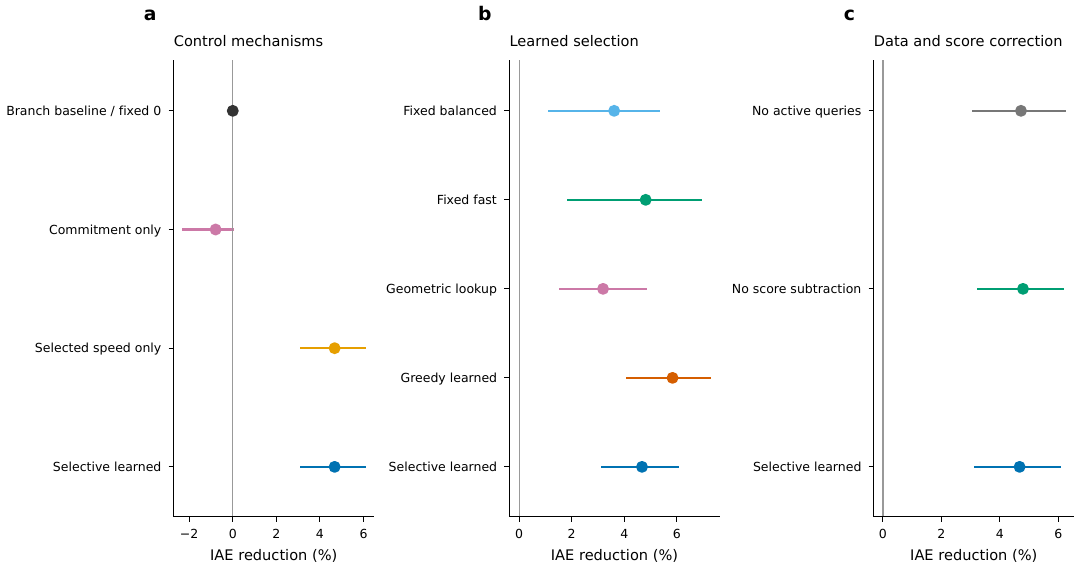}
\caption{Evidence for the proposed mechanisms. \textbf{a}, Interventions on branch commitment and selected speed; commitment alone does not improve performance, and removing commitment from the selected-speed controller leaves trajectories unchanged. \textbf{b}, Learned selection versus fixed alternatives, geometric lookup, and greedy learning. \textbf{c}, Data acquisition and score-correction ablations show no extra aggregate gain for the full pipeline. Points show aggregate reductions relative to the branch baseline; intervals use the same paired-case bootstrap as Fig.~\ref{fig:baselineoverview}. Intervals describe context variation, not repeated training.}\label{fig:mechanisms}\end{figure}
\begin{figure}[tbp]\centering\includegraphics[width=\linewidth]{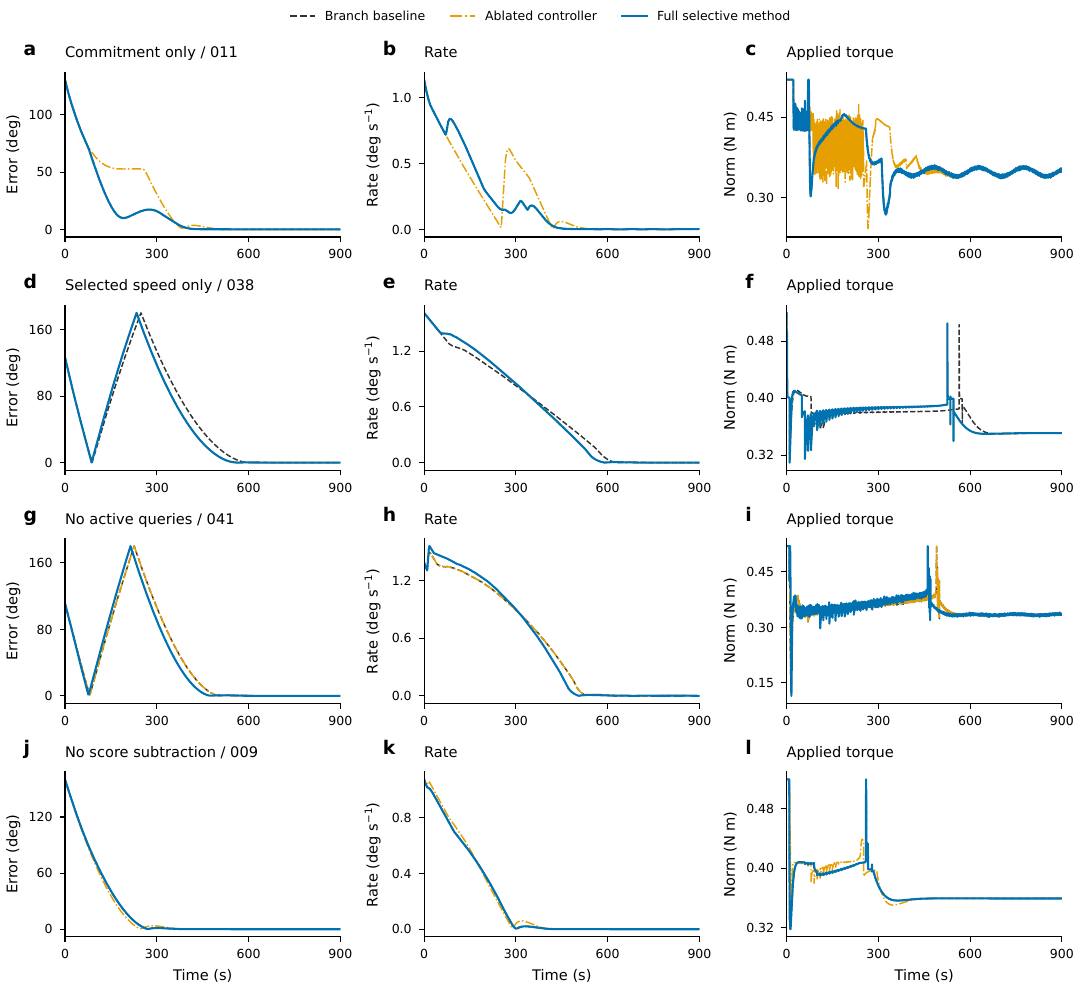}
\caption{Module ablation transients, compared with the branch baseline and full selector. Rows show commitment-only, selected-speed-only, no-active-query, and no-score-subtraction controllers. For each row, the case with the largest absolute IAE difference from the full method is selected, explicitly as a diagnostic extreme; the identical selected-speed-only variant uses case 038. Columns show attitude error, rate magnitude, and actual applied torque norm. Original $0.1\,$s logs are unsmoothed. Full three-axis state and torque histories for two examples appear in Figs.~\ref{fig:states} and \ref{fig:noise}.}\label{fig:ablationcurves}\end{figure}
\subsection{Computational cost and scope of the evidence}\label{sec:discussion}
The outer selection is performed once per episode, but the full control architecture still incurs inherited branch-planning cost. The main configuration uses 215 finite branch forecasts versus 233 for the baseline. Warm selector inference averages $33.6\,$ms in a verification run under concurrent CPU load; the maximum main control callback is $0.440\,$s. The simulation pauses while computing and injects no execution delay. These observations do not establish a $0.1\,$s onboard implementation or a controlled hardware speedup.

The evaluation includes physically difficult braking-boundary cases and modest noise/drift, but its robustness evidence has defined limits. The two $z$-axis boundary test cases exceed the estimator's disturbance prior, while Gaussian measurement noise and time variation also depart from deterministic constant-disturbance assumptions. Successful finite simulations under these mismatches do not prove robust invariance. Empty-set resets remain recovery heuristics. Only one candidate failure occurs in development, and all test portfolio variants succeed; failure-score calibration and task-failure discrimination are consequently not established. A threshold of 0.02 cannot be read as a 2\% failure probability.

The benchmark uses diagonal constant inertia, one noise realization per context, and one frozen training procedure. It omits non-diagonal inertia, actuator dynamics, computation delays, flexible structures, pointing exclusion cones, and orbital coupling. Published MPC and explicit-governor implementations have not been compared under matched budgets. The result is therefore restricted to this portfolio and suite. Additional training seeds, adversarial failure discovery, and matched published-controller tests are the next evidence needed before stronger generalization or safety claims.

\clearpage
\section{Conclusions}\label{sec:conclusion}
Paired-outcome learning provides a concrete way to modify a torque-limited spacecraft capture controller selectively. Physically interpretable continuations supply complete control alternatives; matched rollouts label task failure separately from successful-pair tracking gain; and rejection executes the unchanged branch baseline. On 48 test contexts, the selector reduces aggregate IAE by $4.67\%$ with 17 improvements and 31 exact ties. Fixed fast achieves a slightly lower mean error but produces 11 baseline regressions, demonstrating the observed trade-off between unconditional speed tuning and selective execution.

The improvement is concentrated in principal-axis, near-axis, braking-boundary, and selected noise/drift maneuvers. General coupled cases retain the baseline. Branch commitment, active querying, and score subtraction show no independent aggregate advantage in the reported interventions, and no safety guarantee follows for accepted policies. Future validation should prioritize coupled dynamics, repeated training, actuator and timing effects, and matched predictive-control or governor baselines. These tests would determine whether the present empirical improvement extends beyond its evaluated operating domain.

\section*{Reproducibility and preparation disclosure}
The accompanying experiment archive retains policy implementations, seeds, models, labels, freeze records, and full-resolution metrics. The manuscript evidence bundle snapshots the source JSON files and their checksums. Earlier distributed trajectory samples use $2\,$s spacing; the revised figure evidence includes $0.5\,$s samples for the displayed cases; reported metrics use original $0.1\,$s logs, which the experiment scripts regenerate. No public repository or archival DOI is asserted.

Generative AI tools assisted algorithm implementation, experiment orchestration, analysis, and manuscript preparation. Results are linked to executed simulations. Tool assistance does not replace scientific review or responsibility for the final submission.

\appendix
\section{Estimator and inherited planner details}\label{app:estimator}
For cyclic indices $(i,j,k)$, integrating~\eqref{eq:euler} gives
\begin{equation}
J_i\Delta\omega_i-d_it=U_i+(J_j-J_k)W_i,\quad U_i=\int_0^t u_i\,ds,\quad W_i=\int_0^t\omega_j\omega_k\,ds.
\end{equation}
Accumulated input and trapezoidal rate-product integrals are used. Every $10\,$s, the prior range of $J_j-J_k$ bounds the right-hand side, enlarged by $\epsilon_i(t)=2(5\times10^{-4})\overline J_i+0.005t$ in angular-momentum units. Two half-plane intersections update the correlated $(J_i,d_i)$ polygon; fewer than three vertices resets the prior. This allowance does not bound unbounded Gaussian noise or arbitrary disturbance drift.

For projected direction $\bm n$, the braking routine uses $\bm u_b=-\bar{\bm u}\odot\sgn(\bm n)$ and
\begin{align}
A_i&=\max_{(J_i,d_i)\in\mathcal P_i}\frac{n_i(u_{b,i}+d_i)}{J_i},\\
B_i&=\max_{J\in\mathcal V}\frac{n_i(J_j-J_k)\omega_j\omega_k}{J_i},\\
b(\bm n,\bm\omega)&=\max\{0,-\textstyle\sum_i(A_i+B_i)\},
\end{align}
where $\mathcal V$ comprises the eight prior inertia-box vertices. Separate maxima overbound the projected expression under its fixed assumptions; they do not establish a complete multi-axis capture result.

The inherited predictor evaluates an observer-nominal scenario and a weak-initial-braking polygon-vertex scenario. Its score adds predicted elapsed time, 0.002 times predicted IAE, and a 10,000 penalty for failed finite-rollout checks. Its internal terminal dwell is $10\,$s, whereas measured task completion uses $60\,$s. This planner score is distinct from the measured IAE target used for portfolio learning. The complete original branch model and data remain in the experiment archive.

\section{Feature ordering and fitting details}\label{app:features}
Let $e_0=\theta-2\pi\operatorname{round}(\theta/2\pi)$ and $v_s=2^\circ/\mathrm{s}$. The first 32 features are:
\begin{enumerate}
\item $e_0/\pi$, $v/v_s$, and $\bm\omega/v_s$ (5 entries);
\item $\bm n$, $\widehat{\bm d}\oslash\bar{\bm u}$, and $\bar{\bm u}/0.3$ (9 entries);
\item forward/reverse braking divided by $0.001\;\mathrm{rad/s^2}$, transverse rate divided by $v_s$, and $(\bm\omega\times J_n\bm\omega)\oslash\bar{\bm u}$ (6 entries);
\item per-axis polygon extrema $J_i^{\min}/J_{n,i}-1$, $J_i^{\max}/J_{n,i}-1$, $d_i^{\min}/\bar u_i$, $d_i^{\max}/\bar u_i$ (12 entries).
\end{enumerate}
The last two base entries are $(\theta-\theta_\star)/(2\pi)$ and braking status. Nine added entries are: stopping-distance ratio clipped at 20 then divided by 5; maximum $|n_i|$; transverse/projected rate ratio with normalized denominator floor 0.02; normalized gyroscopic norm; largest normalized disturbance estimate; its projection on $\bm n$; absolute target distance; braking status; and normalized total rate.

Context bootstrapping precedes fourfold augmentation, so copies are never treated as independent cases for splitting. Gain regressors fit successful pairs only. Hyperparameters are selected separately for the main and no-query models using common validation and calibration contexts.

Geometric lookup bins are: $\max|n_i|\geq1-10^{-8}$ and normalized transverse/projected ratio below $10^{-5}$; otherwise dominance above 0.95 and ratio below 0.2; and the remainder. Validation minimizes failure count then summed IAE within each bin. This baseline uses the same paired label table rather than inaccessible test-family labels.

\section{Reproduction and statistics}
The experiment root contains \texttt{environment.py}. Final policies are stored in
\begin{quote}\small\path{research_baselines/portfolio/}\end{quote} Run \texttt{run.py}, \texttt{audit.py}, and \texttt{report.py} there to regenerate test logs and verify metrics. \texttt{data.py} and \texttt{active.py} generate labels; \texttt{portfolio\_model.py} fits the model, with \texttt{--base-only} for its no-query counterpart. Retraining requires a new freeze and fresh tests before claiming independent generalization.

The model SHA-256 is
\begin{center}\small\ttfamily
49cf6ad818705ca14f6e8db21f6e711718\allowbreak45862bb49450e591288dbb2fe0b335.
\end{center}
The evidence manifest identifies the exact JSON snapshots used for tables. Paired bootstrap seed 834092 resamples case indices jointly across controllers, takes the mean absolute IAE difference, and reports 2.5th/97.5th percentiles. Aggregate reductions use $(\sum C_0-\sum C_a)/\sum C_0$, not the mean of per-case percentages.

Baseline and best-fixed policies are duplicates in this experiment. All eight original configurations and four new portfolio diagnostics have 48 successes; the native PD reference has zero task completions and uses different feedback gains without disturbance estimation. Failure probabilities among the portfolio policies cannot be discriminated. Neither observed zero regressions nor Proposition~\ref{prop:identity} guarantees performance for accepted actions. Complete rejection of the general coupled family and the negative ablations are substantive parts of the result.

\section{Training-only query rule}\label{app:query}
A further 96-context pool is observed through $30\,$s baseline prefixes. A preliminary model fitted only on the initial training split scores contexts with available decision features by
\begin{equation}
Q(x)=\max_{a=1,2}s_a(x)+\frac{0.04}{3}\sum_{a=0}^2 r_a(x)(1-r_a(x))+0.02\min\{D(x),3\}.\label{eq:query}
\end{equation}
Here $r_a$ is a classifier score, $s_a$ is ensemble dispersion, and $D$ is a normalized distance defined below. Two queries are taken per family: one maximizing $Q$, and a second maximizing $Q$ plus 0.03 times its symmetry-aware distance from the first. The 12 queried contexts receive full three-policy labels and join training only. This is a query heuristic, not an optimal experimental-design claim.

The acquisition weights are heuristic and were not retuned on the test set. The no-query ablation is necessary because additional labeling does not itself establish a performance contribution.

\bibliographystyle{plainnat}
{\raggedright\bibliography{references}}
\end{document}